\PassOptionsToPackage{table}{xcolor}
\documentclass[11pt]{article}
\usepackage{booktabs,multirow,graphicx,float}
\usepackage{agenticlearning-pdflatex}

\usepackage{placeins}
\usepackage{wrapfig}
\usepackage{array}

\newcommand{\ours}[1]{\textbf{#1}}    
\newcommand{\refarm}[1]{\textit{#1}}  

\graphicspath{{img/}}
\definecolor{AwardOrange}{HTML}{C2410C}
\definecolor{FigBlue}{HTML}{2F6699}   

\newcommand{\takeawaybox}[3]{%
  \par\smallskip
  \begingroup
  \setlength{\fboxrule}{0.8pt}%
  \setlength{\fboxsep}{0pt}%
  \noindent\fcolorbox{FigBlue!75}{FigBlue!4}{%
    \begin{minipage}{\dimexpr\linewidth-2\fboxrule\relax}
      \colorbox{FigBlue}{%
        \hspace{5pt}\parbox{\dimexpr\linewidth-10pt\relax}{%
          \vspace{2pt}\textcolor{white}{\bfseries Takeaway #1: #2}\vspace{2pt}}%
        \hspace{5pt}}%
      \par\noindent\hspace{6pt}\parbox{\dimexpr\linewidth-12pt\relax}{%
        \vspace{3pt}\small #3\vspace{3pt}}\hspace{6pt}
    \end{minipage}}%
  \endgroup
  \par\smallskip
}
\hypersetup{
  colorlinks=true,
  linkcolor=AwardOrange,
  citecolor=LabBlue,
  urlcolor=LabBlue
}

\title{DeepJEPA: Scaling World Models from Within}
\author{
Zijian Jin\textsuperscript{1,\ensuremath{\dagger}},
Yunbei Zhang\textsuperscript{2,\ensuremath{\dagger}},
Yuanzhe Liu\textsuperscript{3},
Ming Liu\textsuperscript{4},
Baian Chen\textsuperscript{5},
Weirui Ye\textsuperscript{5},
Shilong Liu\textsuperscript{4,6},
Marco Pavone\textsuperscript{7}
\\[3pt]
\footnotesize
\textsuperscript{1}New York University \enspace
\textsuperscript{2}Tulane University \enspace
\textsuperscript{3}UIUC
\enspace
\textsuperscript{4}Princeton University \enspace
\textsuperscript{5}MIT \enspace
\textsuperscript{6}Columbia University \enspace
\textsuperscript{7}Stanford University
\\[2pt]
\textsuperscript{\ensuremath{\dagger}}Equal contribution
\\[5pt]
\textbf{Project website:} \url{https://deepjepa.github.io/}
\footnotesize
}
\date{}

\begin{document}
\shorttitle{DeepJEPA}
\shortauthor{Jin \& Zhang, et al.}
\maketitle

\begin{abstract}
World-model planners typically scale outward by rolling farther, sampling more
trajectories, or optimizing longer, while assigning the same computation to
every imagined transition. We show that making every transition uniformly
deeper wastes computation and can degrade planning because useful refinement is
concentrated at a small set of decision-critical events. We introduce
\textbf{DeepJEPA}, a weight-tied joint-embedding predictive world model that
treats transition depth as an inner test-time scaling axis and learns when
another recurrent update is worth computing for each candidate and rollout
step. Across five visual-control
settings, DeepJEPA improves or matches the strongest fixed-depth planner while
averaging only $1.00$--$1.26$ updates per transition. Its additional computation
concentrates at contact onset and sustained object interaction, where latent
corrections can change which candidates enter the planner's elite set and which
action is selected. Representation probes further show that improved planning
does not require uniformly better object-state decodability. DeepJEPA therefore
reframes world-model scaling as a problem of allocating internal computation
where it can change the planner's decision: \emph{think deeper at
decision-critical transitions instead of making every rollout uniformly deeper
or longer}.
\end{abstract}

\begin{figure}[H]
  \centering
  \includegraphics[width=\textwidth]{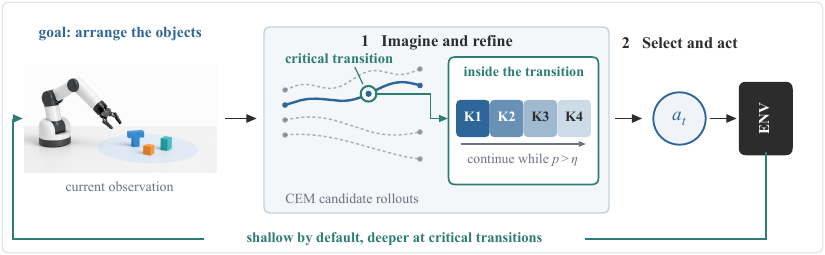}
  \caption{\textbf{Think before you roll.} Conventional planners scale the
  outer search while assigning every imagined transition the same predictor
  depth. DeepJEPA scales from within by keeping most transitions shallow and
  refining only decision-critical ones before the planner selects an action.
  The task, rollout horizon, candidate budget, and outer planning loop remain
  unchanged.}
  \label{fig:teaser}
\end{figure}

\section{Introduction}

Joint-embedding self-supervision learns representations without reconstructing
pixels, and JEPA turns prediction in representation space into a route toward
predictive world models~\citep{chen2020simsiam,
bardes2022vicregvarianceinvariancecovarianceregularizationselfsupervised,jepa,bardes2024revisiting,balestriero2025lejepa}. Action-conditioned
JEPAs turn this principle into a world model: a planner rolls candidate actions
through latent space and selects those whose predicted terminal state best
matches a visual goal. V-JEPA~2-AC and LeWorldModel (LeWM) show that such latent
imagination can drive model-predictive control (MPC) with the Cross-Entropy
Method (CEM)~\citep{assran2025v,maes2026leworldmodel,
rubinstein2004cross}. Related latent planners study pretrained visual features,
reward-free offline data, human video, temporal geometry, and hierarchical
rollouts~\citep{dinowm,sobal2025learning,dexwm,
wang2026temporal_straightening,zhang2026hierarchical,toso2026learning}.

Test-time scaling for world models has largely meant scaling \emph{outward}:
increase the number of candidate trajectories, extend the horizon, or run more
search iterations. Yet the transition model inside every rollout is ordinarily
treated as a fixed-cost primitive. Every candidate, time step, and physical
regime receives the same predictor depth. This hidden uniformity is a poor match
to interaction. Free-space motion is often locally smooth, whereas contact and
multi-object coupling can abruptly change which action is preferable. A planner
therefore does not need uniformly deeper predictions. It needs to scale
\emph{within} the transition, spending additional computation exactly where a
latent update can change a decision.

We call this missing axis \emph{transition-level deliberation}: recurrent
computation performed inside one imagined step before the temporal rollout
continues. A fixed-depth sweep reveals the central contradiction. Although
depth adds computation, $K=4$ never improves the aggregate mean over $K=1$
across five settings, and intermediate depths are non-monotonic. More inner
compute is therefore not a scaling law by itself. Allocation is the scaling
problem.

Prediction is not the final product of a world-model planner. The executed
action is~\citep{kong2026driving}. An additional recurrent update is therefore an internal action whose
value lies in whether it improves the external action selected after planning.
This gives inner scaling a decision-theoretic interpretation grounded in
rational metareasoning~\citep{russell1991principles,hay2014selecting}:
computation should be allocated by its expected value to the planner, not by
depth itself.
DeepJEPA amortizes a reward-free surrogate of this value at the granularity of
one candidate and one imagined transition.

This observation leads to \textbf{DeepJEPA} (Fig.~\ref{fig:teaser}). DeepJEPA
turns a JEPA transition from an atomic prediction into a budgeted computation
process. A weight-tied cell produces successive latent updates, while a learned
continue head decides independently for each candidate--time pair whether to
compute again. Supervision comes from the marginal reduction in latent
prediction error during training, but at inference the decision depends only on
the current recurrent state. DeepJEPA therefore scales the world model from
within CEM without observing a future frame or modifying the candidate budget,
rollout horizon, or goal cost.

Across three independently trained recurrent checkpoints and matched evaluation
roots, adaptive depth improves the best fixed-depth mean on Reacher, Cube
Double, Cube Triple, and PushT $H=10$, and matches it on Cube Single, while
averaging only $K=1.00$--$1.26$. More importantly, the experiments reveal two
principles behind the gain. First, compute is \emph{interaction aligned}: the
probability of thinking beyond $K=1$ rises sharply at PushT contact onset and
during sustained Cube Triple contact. Second, the useful computation is
\emph{decision facing}: it need not make object state globally more decodable.
Instead, exact-batch traces show that small latent corrections reorder CEM
costs, alter elite membership, and change the selected action near a stochastic
search boundary. The sparsity and search-seed dependence of outcome flips are
evidence of this boundary mechanism, not evidence that internal depth is
irrelevant.

Our contributions are:
\begin{itemize}
  \setlength{\itemsep}{1pt}
  \setlength{\parsep}{0pt}
  \setlength{\parskip}{0pt}

  \item We identify \textbf{internal transition depth} as a distinct test-time
  scaling axis and formulate its allocation as a value-of-computation problem,
  where each recurrent update is a metalevel action valued through the planner's
  eventual decision.
  \item We introduce \textbf{DeepJEPA}, which turns each latent transition into
  a budgeted computation process and improves or matches the complete
  fixed-depth frontier across five visual-control settings while averaging
  $\bar K=1.00$--$1.26$.
  \item We characterize \textbf{interaction-aligned, decision-boundary
  refinement} through an elite-margin stability result that identifies when
  internal updates cannot change CEM. Experiments show that useful computation
  concentrates around physical interaction and acts through planner rankings
  rather than uniform gains in state decodability.
\end{itemize}

\section{Related Work}

\textbf{World models and latent planning.}
Learned dynamics support planning from Dyna, visual foresight, and probabilistic
MPC to World Models, Dreamer, model-based policy optimization, MuZero, and
TD-MPC~\citep{dyna,finn2017_visualforesight,chua2018_pets,ha2018world,
hafner2020_dreamer,janner2019_mbpo,schrittwieser2019_muzero,
hansen2022temporal,hafner2023_dreamerv3,hansen2023td}. I-JEPA and V-JEPA replace
pixel reconstruction with feature prediction~\citep{assran2023self,
bardes2024revisiting}. V-JEPA~2-AC and LeWM extend this idea to action-conditioned
prediction and latent planning~\citep{assran2025v,maes2026leworldmodel}, while
related planners study frozen features, offline dynamics, human video,
invariance, temporal straightening, and hierarchy~\citep{dinowm,
sobal2025learning,dexwm,toso2026learning,wang2026temporal_straightening,
zhang2026hierarchical}. These methods scale temporal rollout, optimization, or
search while normally fixing computation inside one transition. DeepJEPA
instead allocates recurrent depth before each imagined transition is accepted.

\textbf{Robot policies and world action models.}
Generalist robot policies connect visual and language representations to action
through large-scale sequence, diffusion, and open-source policy models
\citep{Brohan2022RT1RT,Brohan2023RT2VM,ghosh2024octo,kim2024openvla,
black2025pi0,ye2026datapyramidembodiedmanipulation}. A newer line couples action generation to predictive video or
explicit world-model structure~\citep{kim2026cosmos,sun2026vlajepa,
cen2025worldvla,ye2026dreamzero,yuan2026fastwam,li2026lightwam}.
DeepJEPA addresses an orthogonal question: once a world model is used inside a
planner, how much internal computation should each imagined transition receive?

\textbf{Adaptive test-time computation.}
Inference-time scaling varies samples, search, or verification effort
\citep{wang2022self,yao2023tree,lightman2024let,brown2024large}, with
input-dependent benefits that motivate compute-optimal allocation
\citep{snell2025scaling,xu2026pointsequaluncertaintyaware4d}. Test-time training adapts parameters through
self-supervision or reconstruction~\citep{sun2020ttt,wang2021tent,
zhang2022memo,gandelsman2022tttmae,wang2023videottt,zhang2025ot,
zhang2025dpcore,zhang2026adapting}, and recent world models
adapt planning or latent dynamics~\citep{wang2025adawm,gao2025adaworld,
lanier2025redraw}. Adaptive Computation Time, Universal Transformers,
PonderNet, and early exits instead learn input-dependent network depth
\citep{graves2016adaptive,dehghani2018universal,banino2021pondernet,
xin2020deebert}. DeepJEPA shares the halting idea but neither updates parameters
nor changes the outer planning budget. It allocates depth to one transition of
one candidate trajectory, where value is determined by the resulting rank.

\section{DeepJEPA: Scaling Within a Transition}
\label{sec:method}

\subsection{Budgeted recurrent transitions and learned halting}

Let an MPC rollout contain candidate $i$, temporal step $h$, and internal depth
$k$. Conventional scaling changes the number of candidates or rollout steps
while treating each transition as one fixed-cost prediction. DeepJEPA instead
selects $K_{i,h}\in\{1,\ldots,K_{\max}\}$ for every candidate and time pair.
CEM allocates search across trajectories, while DeepJEPA allocates computation
within them. Its stopping rule is trained from marginal prediction improvement
without a control reward, then tested by whether the resulting allocation
changes the planner's decision.

Let $E_\theta$ be the online visual encoder and $E_{\bar\theta}$ the target
encoder. DeepJEPA conditions each imagined transition on the last $L$ latent
states and actions ($L=3$ in our experiments). A context network and shared
recurrent transition cell compute
\begin{align}
  c_t &= C_\theta(z_{t-L+1:t}, a_{t-L+1:t}), \\
  h_{t+1}^{(k)} &= F_\theta\!\left(h_{t+1}^{(k-1)},c_t,
    b_{t+1}^{(k-1)}\right), \\
  \hat z_{t+1}^{(k)} &= P_\theta(h_{t+1}^{(k)}),
  \qquad k=1,\ldots,K_{\max},
\end{align}
where $b^{(k-1)}$ feeds the previous prediction back into the shared cell. The
state is conditioned on the candidate action, and parameters are tied across
$k$. We train through $K_{\max}=4$ and append the selected prediction to the
imagined history before rolling to the next temporal step.

During training the target encoder provides
$z^*_{t+1}=E_{\bar\theta}(o_{t+1})$. Define the detached per-depth error
$e_k=\lVert\hat z_{t+1}^{(k)}-\operatorname{sg}(z^*_{t+1})\rVert_2^2$ and the
relative improvement
\begin{equation}
  r_k = \frac{e_k-e_{k+1}}{\max(e_k,\epsilon)}, \qquad
  y_k = \mathbb{I}[r_k>\tau_{\mathrm{rel}}].
\end{equation}
We use $\tau_{\mathrm{rel}}=5\times10^{-4}$. A linear continue head predicts
$p_k=\sigma(H_\phi(h_{t+1}^{(k)}))$ and is trained with binary cross entropy.
The full predictor loss combines final-depth latent prediction,
stop-gradient intermediate supervision, and the continue loss:
\begin{equation}
  \mathcal L = \mathcal L_{\mathrm{final}} +
  \lambda_{\mathrm{inter}}\mathcal L_{\mathrm{inter}} +
  \beta\mathcal L_{\mathrm{cont}}.
\end{equation}
At test time $z^*_{t+1}$ is unavailable and is not used. Computation halts at
the first depth with $p_k\leq\eta$, or at $K_{\max}$. Thus $\eta$ controls the
success and selected-depth operating point. Complete training and evaluation
settings are provided in App.~\ref{app:protocol}. Fig.~\ref{fig:method}
summarizes how the same recurrent cell supports adaptive inference and how its
continue targets are constructed only during training.

\begin{figure}[t]
  \centering
  \includegraphics[width=\textwidth]{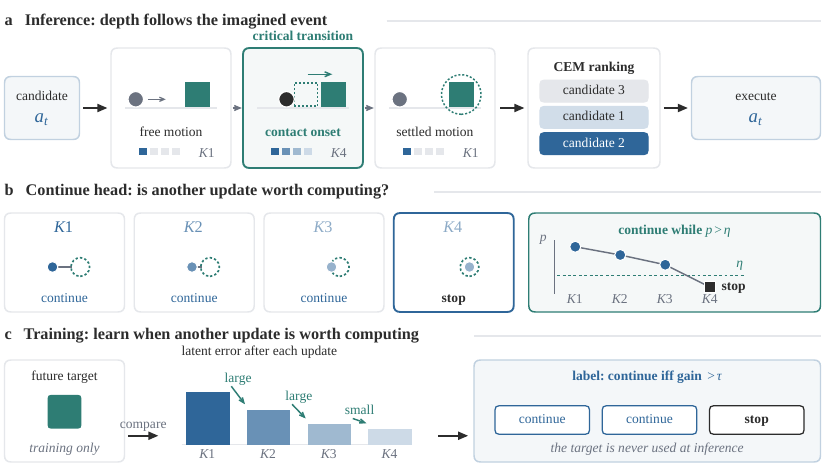}
  \caption{\textbf{DeepJEPA scales world-model computation from within.}
  (a)~At inference each CEM candidate enters a weight-tied transition cell
  that may update its latent prediction up to four times; filled pips give the
  depth selected for that transition. (b)~A continue head decides depth
  independently for every candidate--time pair: the dashed ring is the target
  latent, the dot the current prediction, and the head continues while the
  predicted marginal gain exceeds~$\eta$. (c)~During training, the marginal
  reduction in latent error after each update supplies the continue label.
  Future target latents are used only in training and never at inference.}
  \label{fig:method}
\end{figure}

\subsection{Planner-relative computation and CEM}

Rational metareasoning treats computation as an action whose utility is derived
from its effect on an agent's external decision~\citep{russell1991principles,
hay2014selecting}. We apply this view inside an imagined world-model transition.
Let $S_{i,h}^{(k)}$ denote the information available after $k$ internal updates
for candidate $i$ at rollout step $h$, and let $V(S)$ denote the expected task
utility induced by the fixed outer planner from that information. The net value
of computing one more update is
\begin{equation}
  \operatorname{VOC}_{i,h}^{(k)}
  = \mathbb E\!\left[
      V\!\left(S_{i,h}^{(k+1)}\right)
      -V\!\left(S_{i,h}^{(k)}\right)
      \mid S_{i,h}^{(k)}
    \right]
    -\lambda_{\mathrm{cmp}},
  \label{eq:voc}
\end{equation}
where $\lambda_{\mathrm{cmp}}$ prices an additional recurrent update. A
one-step metalevel rule continues exactly when
$\operatorname{VOC}_{i,h}^{(k)}>0$. With diminishing marginal gains, a common
compute price yields a threshold allocation in which easy transitions stop
early. The full compute-regularized objective, assumptions, and proxy
interpretation are given in App.~\ref{app:theory_details}.

DeepJEPA does not observe the counterfactual task utility in
Eq.~\ref{eq:voc}. Instead, $r_k$ supplies a reward-free local proxy, and the
continue head amortizes whether that proxy exceeds $\tau_{\mathrm{rel}}$.

\textbf{Integration with CEM.}
The selected prediction $\hat z_{t+h+1}^{(i,K_{i,h})}$ is rolled forward and
scored exactly as in LeWM. Candidate distributions, elites, CEM iterations, and
rollout length remain fixed. We report $\bar K$ over all imagined transitions
as selected architectural depth, not proportional wall-clock speedup.

\subsection{Theoretical support: when refinement can change CEM}

The decision value of depth can be made precise through the object CEM
actually consumes, namely the ordering of candidate costs. We analyze one CEM
iteration while holding its sampled action sequences fixed. Let $N$ be the
number of candidates and let $M$, with $1\leq M<N$, be the number retained as
elites. For candidate $i\in\{1,\ldots,N\}$, let $J_i^{(k)}\in\mathbb R$ denote
the goal cost obtained after its $k$th internal update, where lower cost is
better. Write $J_{(m)}^{(k)}$ for the $m$th smallest cost, and let
$\mathcal E_k$ be the indices of the $M$ candidates with smallest costs at
depth $k$. A positive gap between $J_{(M)}^{(k)}$ and
$J_{(M+1)}^{(k)}$ makes this elite set unique.

\begin{table}[t]
\centering
\small
\setlength{\tabcolsep}{9pt}
\caption{\textbf{Adaptive depth improves the fixed-depth frontier on four of
five settings.} Mean success rate in percent over three training/evaluation
seed pairs, 50 episodes each (150 episodes per cell). Single, Double and
Triple are the Cube tasks. LeWM is a separate non-recurrent checkpoint family,
not a depth setting. The last row is in recurrent updates, not percent.
Per-seed standard deviations are in Table~\ref{tab:main_spread}. Higher is
better.}
\label{tab:main_numbers}
\begin{tabular}{@{}lrrrrr@{}}
\toprule
Method & Reacher & Single & Double & Triple & PushT \\
\midrule
LeWM              & 81.3 & 72.0 & 74.7 & 74.0 & 10.4 \\
$K=1$             & 84.0 & 79.3 & 72.0 & 74.0 & 10.7 \\
$K=2$             & 82.0 & 78.7 & 72.0 & 74.0 & 10.7 \\
$K=3$             & 83.3 & 78.0 & 72.7 & 73.3 & 10.4 \\
$K=4$             & 82.7 & 78.0 & 72.0 & 74.0 &  9.8 \\
\midrule
\ours{DeepJEPA}   & \ours{85.3} & \ours{79.3} & \ours{74.0} & \ours{77.3} & \ours{11.6} \\
\midrule
\refarm{Mean depth $\bar{K}$} & \refarm{1.03} & \refarm{1.00} & \refarm{1.26} & \refarm{1.22} & \refarm{1.02} \\
\bottomrule
\end{tabular}
\end{table}

We compare the elite set before and after one additional recurrent update.
Define the elite margin $\gamma_k$ and the cost correction
$\delta_i^{(k)}$ by
\begin{equation}
  \gamma_k = J_{(M+1)}^{(k)}-J_{(M)}^{(k)},
  \qquad
  \delta_i^{(k)}=J_i^{(k+1)}-J_i^{(k)}.
  \label{eq:elite_margin}
\end{equation}
The quantity $\gamma_k$ is the boundary separating the worst current elite
from the best current nonelite. The vector
$\delta^{(k)}=(\delta_1^{(k)},\ldots,\delta_N^{(k)})$ collects the changes in
candidate cost caused by the next update, and
$\lVert\delta^{(k)}\rVert_\infty:=
\max_{1\leq i\leq N}|\delta_i^{(k)}|$ is its largest absolute correction. For
an adaptive update, a candidate that halts can be represented by
$\delta_i^{(k)}=0$, so the same analysis applies when only part of the batch is
refined.

\begin{proposition}[Elite set stability]
\label{prop:elite_stability}
Assume $\gamma_k>0$. If
\begin{equation}
  2\lVert\delta^{(k)}\rVert_\infty < \gamma_k,
  \label{eq:elite_stability_condition}
\end{equation}
then the $M$-candidate CEM elite set is unchanged by update $k+1$. For the
standard unweighted elite update, the next proposal mean and covariance are
also unchanged. We provide a complete proof in App.~\ref{app:elite_margin}.
\end{proposition}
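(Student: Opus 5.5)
The argument compares every current elite against every current nonelite and shows that the order between the two groups survives the update. Write $\Delta:=\lVert\delta^{(k)}\rVert_\infty$. Since $\gamma_k>0$, the set $\mathcal E_k$ is uniquely defined. Every $i\in\mathcal E_k$ satisfies $J_i^{(k)}\le J_{(M)}^{(k)}$, and every $j\notin\mathcal E_k$ satisfies $J_j^{(k)}\ge J_{(M+1)}^{(k)}$. From the definition of $\delta^{(k)}$ we have the elementary bounds $J_i^{(k+1)}\le J_i^{(k)}+\Delta$ and $J_j^{(k+1)}\ge J_j^{(k)}-\Delta$. Combining these with the condition in Eq.~\ref{eq:elite_stability_condition} gives, for any elite $i$ and nonelite $j$,
\begin{equation*}
  J_j^{(k+1)}-J_i^{(k+1)} \;\ge\; J_{(M+1)}^{(k)}-J_{(M)}^{(k)}-2\Delta \;=\; \gamma_k-2\Delta \;>\; 0 .
\end{equation*}
So after update $k+1$, every member of $\mathcal E_k$ is strictly cheaper than every candidate outside it.

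The next step converts this strict separation into equality of elite sets. The $M$ indices of $\mathcal E_k$ all have strictly smaller cost than the remaining $N-M$ indices. Hence the $M$ smallest depth-$(k+1)$ costs are exactly those of $\mathcal E_k$, so $\mathcal E_{k+1}=\mathcal E_k$. The same bound also gives
\begin{equation*}
  J_{(M+1)}^{(k+1)}-J_{(M)}^{(k+1)}\ge\gamma_k-2\Delta>0,
\end{equation*}
so the new elite set is again unique and there is no tie-breaking ambiguity. Adaptive halting is already covered: a candidate that stops early simply contributes $\delta_i^{(k)}=0$, as noted before the proposition.

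The last part concerns the proposal distribution. In the standard unweighted update, the new mean is $\frac1M\sum_{i\in\mathcal E}u_i$ and the new covariance is the empirical covariance over $\mathcal E$. Here $u_i$ are the sampled action sequences, which are held fixed within the iteration. Both quantities depend only on the index set $\mathcal E$ and the fixed samples, not on the cost values. Since the set is unchanged, so are the mean and covariance, including any fixed smoothing or regularization term. I would remark that this invariance fails for cost-weighted updates, which is why the proposition restricts to the unweighted case.

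The main point needing care is not difficult. It is checking that ties cannot arise. The strict inequality in Eq.~\ref{eq:elite_stability_condition} is what rules them out, and it should be invoked explicitly in both the separation step and the uniqueness step.
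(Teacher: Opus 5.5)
Your proposal is correct and follows essentially the same route as the paper's proof in App.~\ref{app:elite_margin}. Both arguments compare every elite with every nonelite to get a post-update gap of at least $\gamma_k-2\lVert\delta^{(k)}\rVert_\infty>0$, and both then note that the unweighted mean and covariance depend only on the elite index set and the fixed action sequences. Your further remark that the new boundary margin $J_{(M+1)}^{(k+1)}-J_{(M)}^{(k+1)}$ stays positive is also correct; the paper does not state it, and the proof does not need it.
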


The proof is a direct ordering argument. For any current elite $i$ and current
nonelite $j$, their original cost gap is at least $\gamma_k$. After refinement,
their gap is at least
$\gamma_k-2\lVert\delta^{(k)}\rVert_\infty$, which remains positive under
Eq.~\ref{eq:elite_stability_condition}. No elite and nonelite pair can exchange
order. The selected action sequences are therefore identical, and the standard
unweighted CEM update computes the same proposal statistics from them. The
formal proof in App.~\ref{app:elite_margin} states each comparison explicitly.

This proposition is a sufficient stability guarantee, not a claim that deeper
prediction always improves control. It identifies a regime in which another
update has exactly zero value to the next CEM proposal even when the latent and
cost both change. Conversely, when the margin contracts, a small
candidate-specific correction may cross the elite boundary and alter the next
action distribution. This explains why DeepJEPA must allocate depth rather than
apply it uniformly. It also yields three testable predictions: outcome changes
should be sparse, depend on the sampled search population, and need not coincide
with improved global state decodability. Section~\ref{sec:experiments} tests
these consequences directly.

\section{Experiments}
\label{sec:experiments}

\textbf{Protocol.}
We evaluate Reacher, visual Cube Single, Double, and Triple, and PushT
\citep{chi2024pusht}. Every setting uses three training and evaluation seed
pairs and 50 episodes per pair. Fixed $K=1,2,3,4$ and learned halting share a
recurrent checkpoint family, roots, and CEM seeds. LeWM is a separate
non-recurrent baseline. We report the best mean-success threshold from the full
sweep in Table~\ref{tab:thresholds}. This characterizes the achievable frontier
rather than held-out threshold selection. Complete planner, seed, episode, and
probe settings are in App.~\ref{app:protocol}.

\noindent\textbf{Q1. Is uniformly deeper prediction a reliable scaling rule?}
Table~\ref{tab:main_numbers} gives the complete comparison. Uniform depth is not
a reliable scaling rule. Within the same recurrent checkpoint family,
aggregate fixed $K=4$
never exceeds fixed $K=1$: it is lower on Reacher, Cube Single, and PushT, and
tied on Cube Double and Cube Triple. Intermediate depths are also
non-monotonic. For example, Reacher moves from $84.0\%$ at $K=1$ to $82.0\%$ at
$K=2$, $83.3\%$ at $K=3$, and $82.7\%$ at $K=4$. PushT likewise peaks at
$10.7\%$ for $K\in\{1,2\}$ and declines to $9.8\%$ at $K=4$. The added
computation is real, but its utility varies with the imagined transition and
planner state. These comparisons hold the recurrent checkpoint, episode roots,
candidate budget, and CEM seeds fixed. The observed reversals therefore cannot
be explained by a larger planner or a different predictor family. They show
that deeper updates can change candidate costs without consistently improving
the ranking that control ultimately consumes. If depth had uniform value, the
fixed sweep would exhibit a stable ordering across $K$. Its failure to do so
motivates allocating computation at the transition level rather than choosing
one globally deeper model.

\takeawaybox{1}{Uniform Depth Is Not a Scaling Law}{Transition depth is a real
test-time compute axis, but its value is state dependent. Applying recurrent
updates everywhere can waste compute and erase useful rankings.}

\begin{figure}[t]
  \centering
  \includegraphics[width=\textwidth]{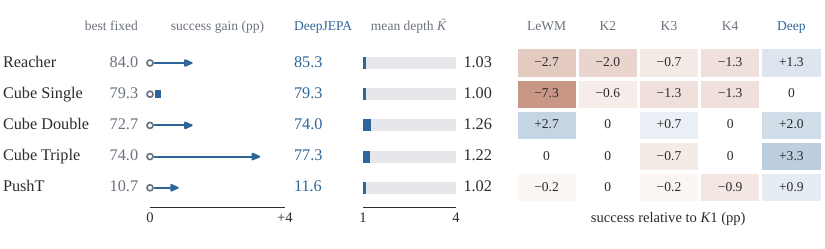}
  \caption{\textbf{DeepJEPA improves the fixed-depth frontier while staying at
  the shallow endpoint.} Left: the move from the best fixed depth (hollow dot)
  to DeepJEPA (arrowhead) in percentage points; the two absolute success rates
  are printed either side of the track, and the stub on Cube Single is a
  measured zero, not a missing bar. Centre: mean selected depth $\bar{K}$ as a
  bar inside the full $K=1$ to $K=4$ budget the model could have spent. Right:
  every arm relative to $K=1$, so a reader can check that no fixed depth beats
  the adaptive one; blue is above $K=1$, brown below. Means over three
  training/evaluation seed pairs, 150 episodes per cell; per-seed spreads are
  in Table~\ref{tab:main_spread}. Higher is better.}
  \label{fig:main}
\end{figure}

\noindent\textbf{Q2. Can adaptive depth improve the fixed-depth frontier?}
Fig.~\ref{fig:main} visualizes the full success and selected-depth frontier.
DeepJEPA changes the shape of this frontier. It reaches $85.3\%$ on Reacher,
$74.0\%$ on Cube Double, $77.3\%$ on Cube Triple, and $11.6\%$ on PushT,
above the best fixed recurrent mean in each setting. It matches Cube Single at
$79.3\%$. The corresponding mean depths are $1.03$, $1.26$, $1.22$, $1.02$,
and $1.00$. DeepJEPA also exceeds the non-recurrent LeWM mean on four settings.
Cube Double separates the comparisons: adaptive depth improves its recurrent
family from $72.7\%$ to $74.0\%$, while LeWM obtains $74.7\%$. Thus DeepJEPA
does not approximate one globally deeper network. It stays near the shallow
endpoint and selectively invokes additional updates. Cube Double further
isolates allocation within the same recurrent family from a change of predictor.
The gain-to-depth pairs sharpen this result. Reacher gains $1.3$ points over its
best fixed mean at $\bar K=1.03$, Cube Triple gains $3.3$ points at
$\bar K=1.22$, and PushT gains $0.9$ points at $\bar K=1.02$. Cube Single
selects $\bar K=1.00$ and exactly matches the best fixed recurrent result. The
halting policy can therefore recover the shallow solution when refinement has
little measured value, while concentrating a small additional budget on tasks
and transitions where the fixed frontier leaves room to improve.

\takeawaybox{2}{Allocation Beats Uniform Depth}{DeepJEPA improves or matches the
complete fixed-depth frontier while remaining near the shallow endpoint. The
important quantity is where the extra recurrent steps are spent.}

\begin{wraptable}[17]{r}{0.62\textwidth}
\centering

\caption{\textbf{Threshold controls allocation, not average depth.} Success
and selected depth across $\eta$. Bold marks the main operating point.}
\label{tab:thresholds}
\scriptsize
\begin{tabular*}{\linewidth}{@{\extracolsep{\fill}}lrrrrr@{}}
\toprule
Task & $\eta=.30$ & $.45$ & $.50$ & $.60$ & $.70$ \\
\midrule
\multicolumn{6}{@{}l}{\refarm{Mean success (\%)}} \\
Reacher & 82.7 & \ours{85.3} & 84.0 & 84.7 & 82.7 \\
Single  & 78.0 & 77.3 & 78.0 & 77.3 & \ours{79.3} \\
Double  & 73.3 & \ours{74.0} & 73.3 & 72.0 & 72.7 \\
Triple  & \ours{77.3} & 73.3 & 74.7 & 73.3 & 72.7 \\
\midrule
\multicolumn{6}{@{}l}{\refarm{Mean selected depth $\bar{K}$}} \\
Reacher & 1.09 & 1.03 & 1.02 & 1.01 & 1.00 \\
Single  & 1.09 & 1.04 & 1.03 & 1.01 & 1.00 \\
Double  & 1.78 & 1.26 & 1.07 & 1.00 & 1.00 \\
Triple  & 1.22 & 1.13 & 1.05 & 1.00 & 1.00 \\
\bottomrule
\end{tabular*}
\end{wraptable}
\noindent\textbf{Threshold sweep.}
The continue threshold $\eta$ changes both how often DeepJEPA reasons beyond
$K=1$ and which transitions receive those updates. Table~\ref{tab:thresholds}
shows the complete sweep for the four paired-outcome tasks. The best operating
point is task dependent: Reacher peaks at $\eta=.45$ with $\bar K=1.03$, Cube
Double at $.45$ with $\bar K=1.26$, and Cube Triple at $.30$ with
$\bar K=1.22$. The sweep also reveals that average depth alone does not explain
performance. On Cube Double, reducing $\bar K$ from $1.78$ to $1.26$ improves
success. On Reacher, a small change from $1.09$ to $1.03$ raises success by
$2.6$ points. The halting rule therefore controls an allocation frontier, not a
scalar ``more compute is better'' curve.

\begin{figure}[t]
  \centering
  \begin{subfigure}[t]{0.50\textwidth}\centering
    \includegraphics[width=\textwidth]{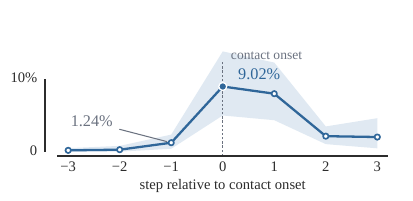}
    \caption{PushT: refinement spikes at contact}
  \end{subfigure}
  \hfill
  \begin{subfigure}[t]{0.46\textwidth}\centering
    \includegraphics[width=\textwidth]{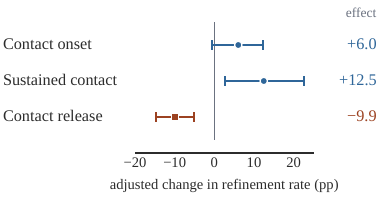}
    \caption{Cube Triple: depth follows contact phase}
  \end{subfigure}
  \caption{\textbf{Adaptive computation is interaction aligned.} (a)~Fraction
  of PushT transitions refined beyond $K=1$, aligned on contact onset; the
  band is the 95\% cluster-bootstrap interval. (b)~Phase effects on refinement
  probability for Cube Triple, adjusted for rollout-step fixed effects,
  distance and action norm; dots are point estimates, whiskers 95\%
  cluster-bootstrap intervals, and squares mark negative effects. Blue is more
  refinement, brown less. 100 roots per task, one checkpoint per task.}
  \label{fig:allocation}
\end{figure}

\noindent\textbf{Q3. Where does the model think deeper?}
We next analyze 100 roots per task using 32 candidates sampled from the final
CEM iteration (32,000 PushT and 16,000 Cube Triple transitions).
Fig.~\ref{fig:allocation} shows a sharp PushT contact-onset response and a
phase-dependent Cube Triple result after controlling for rollout step, distance,
and action norm. Cube Triple sustained contact increases the probability of
refining beyond $K=1$ by $12.5$ points (95\% CI $[2.8,22.6]$), while contact
release reduces it by $9.9$ points ($[-14.6,-5.1]$). Contact onset is positive
with a wider interval: $+6.0$ points ($[-0.5,12.2]$). On PushT, the same pattern
appears as a sharp event-aligned spike: transitions refined beyond $K=1$ rise
from $1.24\%$ immediately before contact to $9.02\%$ at onset
(95\% CI $[5.01,13.88]$) and remain at $8.03\%$ one step later.
The agreement across these physically different settings is important. The
continue head receives only marginal latent-improvement supervision and no
contact labels, yet it assigns depth when local dynamics become coupled. This
\emph{interaction-aligned depth} follows physical phase rather than rollout
position.

\takeawaybox{3}{Depth Follows Physical Interaction}{DeepJEPA concentrates
refinement at contact onset and sustained interaction, then stops when another
update is unlikely to matter.}

\begin{figure}[t]
  \centering
  \begin{subfigure}[t]{0.47\textwidth}\centering
    \includegraphics[width=\textwidth]{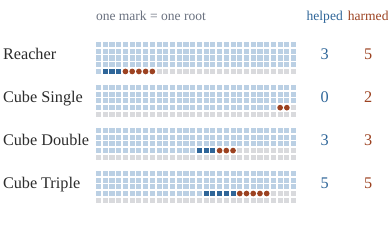}
    \caption{Depth flips few outcomes}
  \end{subfigure}
  \hfill
  \begin{subfigure}[t]{0.25\textwidth}\centering
    \includegraphics[width=\textwidth]{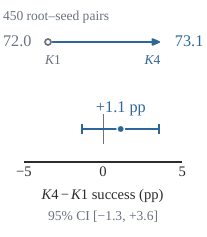}
    \caption{Net effect crosses zero}
  \end{subfigure}
  \hfill
  \begin{subfigure}[t]{0.22\textwidth}\centering
    \includegraphics[width=\textwidth]{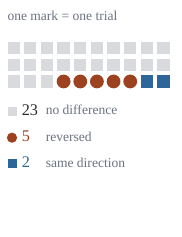}
    \caption{Search changes it}
  \end{subfigure}
  \caption{\textbf{Additional depth acts at CEM decision boundaries.}
  (a)~One mark per root, 150 roots per task, ordered so the flips form a
  seam: pale blue squares succeed at both $K=1$ and $K=4$, pale grey squares
  fail at both, blue squares are helped by depth and brown discs are harmed.
  93--99\% of roots never change outcome. (b)~Across 450 Cube Triple root--search-seed pairs the net effect
  is $+1.1$ points and its 95\% interval crosses zero (paired exact
  $p=0.424$). (c)~Re-sampling the search on the ten reference-discordant roots
  changes the flip direction in five of thirty trials. Together with
  exact-batch replay this is the signature of a local ranking mechanism: depth
  matters when a small cost correction can change the elite set.}
  \label{fig:mechanism}
\end{figure}

\noindent\textbf{Q4. How does refinement change a planning decision?}
The paired split in Fig.~\ref{fig:mechanism} first localizes the effect. Across
Reacher and the three Cube tasks, $93$--$99\%$ of roots have the same binary
outcome under $K=1$ and $K=4$. On Cube Triple, 106 roots succeed at both depths
and 34 fail at both. Only five are helped and five harmed. Uniform depth is
inefficient precisely because most roots are far from the planner boundary.
Exact-batch replay reproduces 297 of 300 individual outcome labels and nine of
ten discordant categories, ruling out a logging mismatch. With three
alternative search seeds per Cube Triple
root (450 root--seed pairs), $K=1$ obtains $72.0\%$ and $K=4$ obtains $73.1\%$.
The difference is $+1.1$ points with
cluster-bootstrap CI $[-1.3,+3.6]$ and paired exact $p=0.424$. Among 30
alternative trials on the ten reference-discordant roots, 23 show no difference,
five reverse direction, and two preserve it. The outcome changes only when both
model depth and sampled candidates place the optimizer near a ranking boundary.
The dependence on search seed does not erase the mechanism. It localizes the
mechanism to candidate populations whose cost ordering lies close enough to an
elite boundary for a depth-induced correction to matter.

The exact traces expose the causal interface. Holding the CEM candidate batch
fixed, deeper latent updates produce small, candidate-specific goal-cost
corrections. Those corrections can move trajectories into or out of the elite
set. The updated Gaussian then proposes a different action. CEM therefore acts
as an amplifier that converts a local representation change into a closed-loop
decision. Global average prediction error can remain almost unchanged while the
rank of the few candidates that determine control changes materially.

\takeawaybox{4}{Refinement Matters at Ranking Boundaries}{Internal depth has
decision value when candidate-specific cost corrections cross a CEM elite
boundary and change the action distribution.}

\begin{figure}[t]
  \centering
  \begin{subfigure}[t]{0.52\textwidth}\centering
    \includegraphics[width=\textwidth]{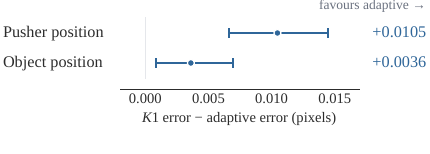}
    \caption{PushT: negligible change}
  \end{subfigure}
  \hfill
  \begin{subfigure}[t]{0.44\textwidth}\centering
    \includegraphics[width=\textwidth]{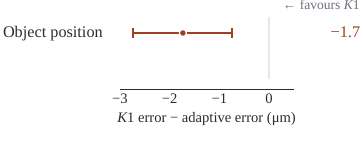}
    \caption{Cube Triple: opposite sign}
  \end{subfigure}
  \caption{\textbf{Global object-state decodability is not the mechanism.}
  Paired MLP-probe error differences between fixed $K=1$ and adaptive
  DeepJEPA on 10{,}000 held-out transitions per task. Dots are paired means,
  whiskers 95\% intervals. Positive favours adaptive depth. PushT effects stay
  below $0.011$ pixels; Cube Triple object position moves $1.7\,\mu$m in the
  opposite direction. This null rules out a uniform state-semantics
  explanation and supports the local CEM-ranking mechanism.}
  \label{fig:probe_delta}
\end{figure}

\noindent\textbf{Q5. Is the gain explained by globally better state features?}
Fig.~\ref{fig:probe_delta} directly tests the tempting alternative explanation
that extra
depth simply produces a more accurate object-centric latent. On 10,000 held-out
PushT transitions, adaptive refinement changes MLP pusher- and object-position
errors relative to the same recurrent model at $K=1$ by only $0.0105$ and
$0.0036$ pixels. On Cube Triple, adaptive object-position error is worse by
$1.7\,\mu$m. Cross-model comparisons can still favor the recurrent predictor
over LeWM, but within DeepJEPA the selected additional updates do not create a
useful global decodability gain. This rules out a uniform state-semantics account
and agrees with the exact CEM traces, where sparse candidate-specific updates
matter through relative cost ordering.

This negative result is mechanistically informative. A uniform semantics
account predicts a broad probe gain, which we do not observe. Instead, a small
number of candidate-specific updates matter through relative cost rather than
through a globally better state representation.

\takeawaybox{5}{Decision Value Is Not Global Decodability}{DeepJEPA improves
planner-relevant rankings without uniformly improving object-state probes. Its
useful computation is selective and rank aware.}

\FloatBarrier

\section{What Changes About World-Model Scaling?}

\textbf{Scaling from within is metalevel resource allocation.}
Rollout horizon and candidate count describe how far and broadly a model
imagines, but not how much computation it spends before accepting one state
change. DeepJEPA separates these axes. Search chooses which futures to consider,
while the transition model chooses which changes deserve deeper processing.
Contact onset and sustained interaction emerge as high-compute regimes even
though the continue head receives no contact labels. Adaptive depth can thus act
as an internal event detector that follows changes in scene dynamics rather than
a fixed network schedule.

\textbf{Prediction quality is planner relative.}
CEM consumes ranks and elite sets, not average feature decodability.
Proposition~\ref{prop:elite_stability} identifies the boundary: refinement
cannot change the CEM update when its largest cost correction is less than half
the elite margin, while a smaller correction can matter when that margin
contracts. Future continue targets can therefore estimate rank margin,
elite-set stability, or action sensitivity directly. This is the purpose of
our analysis beyond the empirical gains. It turns adaptive depth into a general
principle for allocating world-model computation at the interface between
prediction and decision. Full experimental scope, threshold-selection caveats,
and compute limitations are documented in App.~\ref{app:limitations}.

\section{Conclusion}

DeepJEPA scales world models from within each imagined transition. Across five
visual-control settings, it improves or matches the fixed-depth frontier while
remaining near $K=1$ on average. Its deeper updates concentrate around physical
interaction and reshape CEM rankings near decision boundaries rather than
uniformly improving state decodability. Together, the results separate two
notions of world-model scale: how many futures a planner imagines and how much
computation the model spends before accepting each imagined change. The elite
stability result provides a concrete boundary between useful and inert internal
computation. When candidate cost corrections cannot cross the elite margin,
another update leaves the CEM proposal unchanged. When that margin contracts,
even a small candidate-specific correction can redirect the search. This link
between latent refinement and planner sensitivity suggests a broader design
principle for predictive control. Future world models should learn compute
policies at the prediction-to-decision interface, potentially using rank
stability or action sensitivity as more direct training signals. The design
principle is simple: \emph{think before you roll}, spending depth where a better
prediction can change the decision.

\section*{Acknowledgements}
We thank Quentin Le Lidec, Lucas Maes, and Randall Balestriero for helpful
discussions and valuable feedback.

\bibliography{iclr2027_conference}
\bibliographystyle{apalike}

\appendix

\section{Experimental Protocol}
\label{app:protocol}

Table~\ref{tab:protocol} summarizes the main evaluation. Reacher and Cube use
300 CEM candidates, 30 iterations, 30 elites, and 100-step episodes. Cube goals
are sampled 25 dataset steps after the initial observation. All fixed-depth and
learned-depth evaluations within a recurrent checkpoint use the same episode
roots and CEM seeds. Training seeds are 3072, 3073, and 3074, paired with
evaluation seeds 42, 43, and 44. Each pair contributes 50 episodes, giving 150
episodes per setting. We evaluate
$\eta\in\{0.30,0.45,0.50,0.60,0.70\}$ and use the highest mean-success point
for the main comparison. The one-step probes use a history of 3, frame skip 5,
30,000 training frames, 6,000 validation frames, and 10,000 evaluation
transitions. Contact analyses use 100 roots and one checkpoint per task.

\begin{table}[h]
\centering
\small
\setlength{\tabcolsep}{8pt}
\caption{\textbf{Main evaluation protocol.} Each row uses three
training/evaluation seed pairs and 50 episodes per pair. Cube goals are
sampled 25 dataset steps after the initial observation. All fixed-depth and
learned-depth evaluations within a recurrent checkpoint share episode roots
and CEM seeds. PushT uses the planner configuration shipped with the
benchmark, identical across all methods, marked ---.}
\label{tab:protocol}
\begin{tabular}{@{}lrrrrrr@{}}
\toprule
Setting & Horizon & Action block & CEM samples & Iterations & Elites & $\eta$ \\
\midrule
Reacher     &  5 & 5   & 300 & 30 & 30 & 0.45 \\
Cube Single &  5 & 5   & 300 & 30 & 30 & 0.70 \\
Cube Double &  5 & 5   & 300 & 30 & 30 & 0.45 \\
Cube Triple &  5 & 5   & 300 & 30 & 30 & 0.30 \\
PushT       & 10 & --- & --- & --- & --- & 0.60 \\
\bottomrule
\end{tabular}
\end{table}

\begin{table}[h]
\centering
\small
\setlength{\tabcolsep}{9pt}
\caption{\textbf{Seed spread for every entry of Table~\ref{tab:main_numbers}.}
Standard deviation of success rate in percentage points across the same three
training/evaluation seed pairs. Rows and columns match
Table~\ref{tab:main_numbers} exactly.}
\label{tab:main_spread}
\begin{tabular}{@{}lrrrrr@{}}
\toprule
Method & Reacher & Single & Double & Triple & PushT \\
\midrule
LeWM              & 4.2 & 12.0 & 7.6 & 8.0 & 2.3 \\
$K=1$             & 5.3 &  8.1 & 3.5 & 4.0 & 1.2 \\
$K=2$             & 2.0 &  7.6 & 5.3 & 0.0 & 2.4 \\
$K=3$             & 1.2 &  8.7 & 5.0 & 5.0 & 2.0 \\
$K=4$             & 1.2 &  8.7 & 5.3 & 6.0 & 1.7 \\
\midrule
\ours{DeepJEPA}   & \ours{4.2} & \ours{8.1} & \ours{3.5} & \ours{7.6} & \ours{2.5} \\
\bottomrule
\end{tabular}
\end{table}

\section{Decision-Theoretic Details}
\label{app:theory_details}

The value-of-computation view in Eq.~\ref{eq:voc} can be written as a global
compute-regularized planning objective. Let $K_{i,h}$ be the selected recurrent
depth for candidate $i$ at imagined step $h$, let $U(a_t)$ be the downstream
utility of the action returned by the fixed outer planner, and let
$\lambda_{\mathrm{cmp}}\geq0$ price one recurrent update. The allocation problem
is
\begin{equation}
  \max_{\{K_{i,h}\}}
  \mathbb E[U(a_t)]
  -\lambda_{\mathrm{cmp}}\sum_{i,h}(K_{i,h}-1).
  \label{eq:compute_regularized_planner}
\end{equation}
The first update is required, so the second term charges only computation beyond
$K=1$. This formulation holds the candidate distribution, rollout horizon,
elite count, and CEM iterations fixed. It therefore isolates internal transition
depth from outward search budget.

The one-step rule in Eq.~\ref{eq:voc} follows by comparing the conditional
expected utility gain from update $k+1$ with its price. If expected marginal
gains are nonincreasing with depth, once the value falls below the common price
it cannot become worthwhile at a later depth. The optimal one-step policy then
has threshold form. This argument motivates input-dependent stopping but does
not establish global optimality for a finite stochastic CEM run.

DeepJEPA cannot observe counterfactual task utility for every imagined update.
The relative latent-error reduction $r_k$ is therefore a reward-free local
proxy, and the continue head amortizes whether it exceeds
$\tau_{\mathrm{rel}}$. Consequently $p_k$ is not claimed to equal the true
value of computation. The empirical questions are whether this proxy allocates
depth to decision-critical transitions, whether those updates alter candidate
rankings, and whether adaptive allocation improves the fixed-depth frontier.

\section{Proof of Elite-Set Stability}
\label{app:elite_margin}

We restate the notation needed by Proposition~\ref{prop:elite_stability}. Let
$\mathcal I=\{1,\ldots,N\}$ index a fixed batch of $N$ candidate action
sequences, and let $1\leq M<N$ be the CEM elite count. Candidate $i$ has action
sequence $u_i$ and cost $J_i^{(k)}$ after $k$ recurrent transition updates. Let
$\pi_k$ be a permutation that sorts these costs in nondecreasing order,
\begin{equation}
  J_{\pi_k(1)}^{(k)}\leq \cdots \leq J_{\pi_k(N)}^{(k)},
  \qquad J_{(m)}^{(k)}:=J_{\pi_k(m)}^{(k)}.
\end{equation}
The elite index set is
\begin{equation}
  \mathcal E_k:=\{\pi_k(1),\ldots,\pi_k(M)\},
\end{equation}
and the boundary margin is
\begin{equation}
  \gamma_k:=J_{(M+1)}^{(k)}-J_{(M)}^{(k)}.
\end{equation}
The assumption $\gamma_k>0$ rules out a tie at the elite boundary. One
additional recurrent update changes the candidate costs but not the fixed action
sequences. Its correction vector $\delta^{(k)}\in\mathbb R^N$ has entries
\begin{equation}
  \delta_i^{(k)}:=J_i^{(k+1)}-J_i^{(k)},
  \qquad
  \lVert\delta^{(k)}\rVert_\infty
  :=\max_{i\in\mathcal I}|\delta_i^{(k)}|.
\end{equation}

\begin{proof}[Proof of Proposition~\ref{prop:elite_stability}]
Fix any previous elite candidate $e\in\mathcal E_k$ and any previous non-elite
candidate $n\in\mathcal I\setminus\mathcal E_k$. By the definitions of the
order statistics and the elite set,
\begin{equation}
  J_e^{(k)}\leq J_{(M)}^{(k)},
  \qquad
  J_n^{(k)}\geq J_{(M+1)}^{(k)}.
\end{equation}
Subtracting the first inequality from the second shows that every cross-boundary
pair is separated by at least the elite margin:
\begin{equation}
  J_n^{(k)}-J_e^{(k)}
  \geq J_{(M+1)}^{(k)}-J_{(M)}^{(k)}
  =\gamma_k.
  \label{eq:appendix_pair_margin}
\end{equation}
After one additional recurrent update, the separation of the same pair is
\begin{align}
  J_n^{(k+1)}-J_e^{(k+1)}
  &= J_n^{(k)}-J_e^{(k)}+\delta_n^{(k)}-\delta_e^{(k)} \\
  &\geq \gamma_k+\delta_n^{(k)}-\delta_e^{(k)} \\
  &\geq \gamma_k-|\delta_n^{(k)}|-|\delta_e^{(k)}| \\
  &\geq \gamma_k-2\lVert\delta^{(k)}\rVert_\infty \\
  &>0,
  \label{eq:appendix_positive_gap}
\end{align}
where the first inequality uses Eq.~\ref{eq:appendix_pair_margin}, the second
uses $x\geq-|x|$, the third uses the definition of the infinity norm, and the
last uses the condition in Eq.~\ref{eq:elite_stability_condition}. Hence every
member of $\mathcal E_k$ remains strictly cheaper than every member of
$\mathcal I\setminus\mathcal E_k$ after the update. Since $\mathcal E_k$ has
exactly $M$ elements, these candidates are precisely the $M$ smallest-cost
candidates at depth $k+1$. Therefore
\begin{equation}
  \mathcal E_{k+1}=\mathcal E_k.
  \label{eq:appendix_elite_identity}
\end{equation}

It remains to connect elite-set identity to the next CEM proposal. For the
standard unweighted update on the fixed action-sequence batch, the empirical
proposal parameters computed from an elite set $\mathcal E$ are
\begin{align}
  \mu(\mathcal E)
  &=\frac{1}{M}\sum_{i\in\mathcal E}u_i, \\
  \Sigma(\mathcal E)
  &=\frac{1}{M}\sum_{i\in\mathcal E}
    \bigl(u_i-\mu(\mathcal E)\bigr)
    \bigl(u_i-\mu(\mathcal E)\bigr)^\top.
\end{align}
The action sequences $u_i$ are fixed, and
Eq.~\ref{eq:appendix_elite_identity} gives identical summation indices before
and after refinement. Consequently both the proposal mean and covariance are
unchanged. This proves the proposition.
\end{proof}

\section{Scope and Limitations}
\label{app:limitations}

The main evaluation contains three training and evaluation seed pairs and 150
episodes per setting, with uncertainty reported across evaluation seeds.
Per-task halting thresholds are selected from the measured sweep rather than a
held-out validation set. The resulting values characterize the attainable
success and compute frontier, not a deployment-time threshold-selection rule.
The contact and probe studies use one checkpoint per task, so their uncertainty
quantifies sampled roots or transitions rather than training variation.

Mean selected depth is an architectural compute measure. It does not imply a
proportional wall-clock reduction under batched execution, where a small number
of continuing candidates can retain the cost of another synchronized update.
PushT success also remains low in absolute terms. Finally, helped and harmed
episode labels depend on both the learned model and the sampled CEM population.
They are not permanent semantic classes of initial states. These boundaries
limit quantitative generalization while leaving the central empirical pattern
intact: uniform depth is non-monotonic, adaptive depth is sparse and interaction
aligned, and its control effect enters through planner rankings.



\end{document}